\newtheorem{theorem}{Theorem}
\newtheorem{lemma}{Lemma}
\newtheorem{corollary}{Corollary}[theorem]
\def\BibTeX{{\rm B\kern-.05em{\sc i\kern-.025em b}\kern-.08em
    T\kern-.1667em\lower.7ex\hbox{E}\kern-.125emX}}
\newcommand{\sign}{\text{sign}}
\begin{document}

\title{Robust Principal Component Analysis Using a Novel Kernel Related with the $L_1$-Norm

\thanks{This work is funded by the NSF under award 1739396. and it is also supported in part by an award from the University of Illinois at Chicago Discovery Partners Institute Seed Funding Program. The work of D. Badawi is partially supported by NSF under award 1934915. The work of E. Koyuncu is supported in part by the NSF Award CCF-1814717.}
}

\author{\IEEEauthorblockN{Hongyi Pan, Diaa Badawi, Erdem Koyuncu, A. Enis Cetin}
	\IEEEauthorblockA{Department of Electrical and Computer Engineering,
	University of Illinois at Chicago\\
	\{hpan21, dbadaw2, ekoyuncu, aecyy\}@uic.edu}
}
\maketitle


\begin{abstract}
We consider a family of vector dot products that can be implemented using sign changes and addition operations only. The dot products are energy-efficient as they avoid the multiplication operation entirely. Moreover, the dot products induce the $\ell_1$-norm, thus providing robustness to impulsive noise. First, we analytically prove that the dot products yield symmetric, positive semi-definite generalized covariance matrices, thus enabling principal component analysis (PCA). Moreover, the generalized covariance matrices can be constructed in an Energy EFficient (EEF) manner due to the multiplication-free property of the underlying vector products. We present image reconstruction examples in which our EEF PCA method result in the highest peak signal-to-noise ratios compared to the ordinary $\ell_2$-PCA and the recursive $\ell_1$-PCA.


\end{abstract}

\begin{IEEEkeywords}
Principal Component Analysis (PCA), $\ell_1$-norm kernel, robust PCA, multiplication-free methods.
\end{IEEEkeywords}
\section{Introduction}
In data analysis problems with a large number of input variables, dimension reduction methods are very useful to reduce the size of the input by decreasing the complexity of the problem while sacrificing negligible accuracy. Principal Component Analysis (PCA) and related methods are widely used in data analysis field as dimension reduction techniques \cite{On lines and planes of closest fit to systems of points in space, Nonlinear component analysis as a kernel eigenvalue problem, A generalization of principal component analysis, A tutorial on principal component analysis}.
In most problems, the lower dimensional subspaces that are obtained using the eigenvectors effectively capture the nature of the input data structure. As a result, PCA can be also used in a variety of applications including novelty detection \cite{Kernel PCA for novelty detection, Model selection of Gaussian kernel PCA for novelty detection}, data clustering \cite{K-means clustering via principal component analysis, On the minimum average distortion of quantizers with index-dependent distortion measures, A systematic distributed quantizer design method with an application to MIMO broadcast channels, A semi-NMF-PCA unified framework for data clustering, Turning big data into tiny data, PCA-based high-dimensional noisy data clustering via control of decision errors, A source coding perspective on node deployment in two-tier networks}, denoising \cite{Denoising based on time-shift PCA, MRI noise estimation and denoising using non-local PCA, Diffusion weighted image denoising using overcomplete local PCA, Noise level estimation for model selection in kernel PCA denoising} and outlier detection \cite{Cluster pca for outliers detection in high-dimensional data, Functional outlier detection with robust functional principal component analysis, Image outlier detection and feature extraction via L1-norm-based 2D probabilistic PCA, Robust and Computationally-Efficient Anomaly Detection Using Powers-Of-Two Networks}.

Although the conventional PCA based on the regular dot-product and the $\ell_2$-norm has successfully solved many problems, it is sensitive to outliers in data because the effects of the outliers are not suppressed by the $\ell_2$-norm. It turns out that $\ell_1$-PCA is more robust to outliers and it can be iteratively solved  in $ {O}(N^{rK-K+1})$  for  $D$ dimensional vectors, where $N$ is the number of data vectors,  $ 1\leq K<r=$ (rank of the $N\times D$ data matrix) \cite{Optimal algorithms for l1-subspace signal processing}. Therefore, researchers proposed iterative methods to compute $\ell_1$-PCA to achieve robustness against outliers in data
\cite{Optimal algorithms for l1-subspace signal processing, Principal component analysis based on L1-norm maximization}. 
The recursive $\ell_1$-PCA method requires some parameters to be properly adjusted. On the other hand, the proposed kernel based approach does not need any hyperparameters to be adjusted. This is because we construct a sample covariance matrix using the kernel and obtain the eigenvalues and eigenvectors to define the orthogonal linear transformation instead of solving an optimization problem. 

We recently introduced a family of operators related with $\ell_1$-norm to extract features from image regions and to design Additive neural Networks (AddNet) in a wide range of computer vision applications \cite{An energy efficient additive neural network, Non-euclidean vector product for neural networks, Additive neural network for forest fire detection, Image description using a multiplier-less operator}. We call the new family of operators Energy-Efficient (EEF) operators because they do not require any multiplications which consume more energy compared to additions and binary operations in most processors. 
Instead of a multiplication, the operators use the sign of multiplication and either sum the absolute values of operands, or calculate the minimum or maximum of operands. When we construct dot-product like operations from the EEF operators they induce the $\ell_1$-norm. Details of the EEF-operator are provided in Section \ref{Multiplication-Free Vector Product}. 

In this paper, we define three multiplication-free dot products and construct the corresponding multiplication-free covariance matrices. The fact that the underlying dot product is not an ordinary Euclidean inner product implies that the covariance matrix is not necessarily symmetric and positive semi-definitive. Nevertheless, we analytically prove that two of our vector products yield symmetric and positive semi-definite covariances. Correspondingly, we find the eigenvalues and eigenvectors of the matrices as in regular $\ell_2$-PCA. 
The resulting eigenvectors are orthogonal to each other and one can perform orthogonal projection onto the subspace formed by the eigenvectors to reduce the dimension, perform denoising and other similar PCA applications used in data analysis. In addition, the dot products defined by the operators can be computed without performing any multiplications. Consequently, the matrices of the new kernels can be computed in an energy efficient manner because the new kernels are based on sign operations, binary operations and additions. 

\section{Energy-Efficient (EEF) Vector Products}\label{Multiplication-Free Vector Product}
In this section, we motivate and introduce the family of multiplication-free dot products and establish their relationship to the $\ell_1$-norm.
\subsection{Motivation}
 Let $\mathbf{w} = [w_1 \cdots w_n]^T \in \mathbb{R}^{D\times 1}$ and $\mathbf{x} = [x_1 \cdots x_n]^T \in \mathbb{R}^{D\times 1}$  be two $D$-dimensional column vectors. The standard Euclidean inner product is defined as
\begin{align}
\label{ell2innerproduct}
     \langle \mathbf{w}, \mathbf{x} \rangle = \mathbf{w}^T \mathbf{x} \triangleq \sum_{i=1}^D w_i x_i
\end{align}
Note that because the product $\langle \cdot, \cdot \rangle$ induces the $\ell_2$-norm in the sense that for any $\mathbf{x}$, we have $\langle \mathbf{x}, \mathbf{x}\rangle  = \|\mathbf{x}\|^2 = \sum_{i=1}^D |x_i|^2$.

The $D$ multiplication operations that appear in the inner product Eq. (\ref{ell2innerproduct}) may be costly in terms of energy consumption and time. The existence of multiplications are also undesirable in the presence of outliers: For example, if a component is an outlier with a relatively large magnitude, multiplication will further amplify its effect, making the result of the inner product unreliable. In this context, it has been  recently observed that in many applications, $\ell_1$-based methods outperform $\ell_2$-based methods thanks to their better resilience against outliers or impulse-type noise. These observations motivate us to define the new dot products that induce the $\ell_1$-norm. The new dot products should avoid multiplications both for the sake of computational and energy efficiency as well as robustness. 

\subsection{Multiplication-Free (MF) Dot Products}
In this work, we will evaluate the performance of three different MF operators, described in what follows. Given a real number $a\in\mathbb{R}$, let
\begin{equation}
\mathrm{sign}(a) = 
\begin{cases}
-1,& a<0,\\
0,& a=0,\\
1,& a>0,
\end{cases}
\end{equation}
denote the sign of $a$. Unlike \cite{Additive neural network for forest fire detection} where we define $\sign(0)=1$ or $\sign(0)=-1$ to take advantage of bit-wise operations, we utilize the standard signum function for better precision here. 

First, we introduce our original MF dot product \cite{An energy efficient additive neural network, Non-euclidean vector product for neural networks}. It is defined as
\begin{align}
\mathbf{w}^T\oplus_{mf} \mathbf{x} = \sum_{i=1}^D \sign(w_i x_i)(|w_i| + |x_i|)
\label{eqn_mf_vector}
\end{align}

Note that the only multiplication operations that appears in Eq. (\ref{eqn_mf_vector}) correspond to sign changes and can be implemented with very low complexity. For this reason, we do not count the sign changes towards multiplication operations and thus call Eq. (\ref{eqn_mf_vector}) an MF dot product. It can easily be verified that the product in Eq. (\ref{eqn_mf_vector}) induces a scaled version of $\ell_1$-norm as 
\begin{equation}
	\mathbf{x}^T \oplus_{mf} \mathbf{x} = \sum_{i=1}^n |x_i|+|x_i| = 2\|\mathbf{x}\|_1
\end{equation}

Notice that the original MF dot product conducts scale of 2, we are seeking another $\ell_1$-norm based method without any scaling. We then define a min-based MF dot product:
\begin{align}
	\label{odotproduct}
	\mathbf{w}^T \odot \mathbf{x}  & \triangleq \sum_{i=1}^D \sign(w_i x_i) \min(|w_i|, |x_i|).
\end{align}
and its variation:
\begin{align}
	\label{odotproduct2}
	\mathbf{w}^T\odot_m \mathbf{x} \triangleq \sum_{i=1}^D \mathbf{1}\left(\sign(w_i) = \sign(x_i)\right) \min(|w_i|, |x_i|)
\end{align}
Here, $\mathbf{1}(\cdot)$ is the indicator function. The variant is related to the XX similarity measure \cite{On the positive semi-definite property of similarity matrices}. In Eq. (\ref{odotproduct2}), components of opposite sign $\sign(w_i) \neq \sign(x_i)$ have no contribution towards the dot product, while in Eq. (\ref{odotproduct}), they contribute as a subtractive term. Both of them induce $\ell_1$-norm as 
\begin{equation}
	\mathbf{x}^T \odot \mathbf{x} = \sum_{i=1}^n \min(|x_i|, |x_i|) = \|\mathbf{x}\|_1
\end{equation}

\begin{equation}
	\mathbf{x}^T \odot_m \mathbf{x} = \sum_{i=1}^n \min(|x_i|, |x_i|) = \|\mathbf{x}\|_1
\end{equation}

Vector dot products described above can be extended to matrix multiplications as follows: Let $\mathbf{W} \in \mathbb{R}^{n\times m}$ and $\mathbf{X} \in \mathbb{R}^{n\times p}$ be arbitrary matrices. We then define
\begin{align}
	\mathbf{W^T} \!\oplus\! \mathbf{X} \triangleq 
	\begin{bmatrix}
		\mathbf{w}_1^T\oplus \mathbf{x}_1&\mathbf{w}_1^T\oplus \mathbf{x}_2&\dots&\mathbf{w}_1^T\oplus \mathbf{x}_p\!\!\!\!\!\!\\
		\mathbf{w}_2^T\oplus \mathbf{x}_1&\mathbf{w}_2^T\oplus \mathbf{x}_2&\dots&\mathbf{w}_2^T\oplus \mathbf{x}_p\!\!\!\!\!\!\\
		\vdots&\vdots&\ddots&\vdots&\\
		\mathbf{w}_m^T\oplus \mathbf{x}_1&\mathbf{w}_m^T\oplus \mathbf{x}_2&\dots&\mathbf{w}_m^T\oplus \mathbf{x}_p\!\!\!\!\!\!
	\end{bmatrix}
	\label{eq: matrix}
\end{align}
where  $\oplus\in\{\oplus_{mf}, \odot, \odot_m\}$, $\mathbf{w}_i$ is the $i$th column of $\mathbf{W}$ for $ i = 1,\ 2,\ \dots,\ m$ and $\mathbf{x}_j$ is the $j$th column of $\mathbf{X}$ for $j = 1,\ 2,\ \dots,\ p$.  In brief, the definition is similar to the matrix production $\mathbf{W}^T\mathbf{X}$ by only changing the element-wise product to element-wise MF-operation or element-wise min-operation.

\section{Robust Principal Component Analysis}
\label{Robust Principal Component Analysis}

Suppose that we collect members of a $D$-dimensional dataset $\{\mathbf{x}_1,\ldots,\mathbf{x}_N\}$ to a $D\times N$ matrix $\mathbf{X}=[\mathbf{x}_1 \ \mathbf{x}_2 \ ... \  \mathbf{x}_N]\in \mathbb{R}^{D\times N}$. The well-known $\ell_2$-PCA method relies on investigating the eigendecomposition of the sample covariance matrix 
\begin{align}
\label{l2samplecovariance}
\mathbf{C} = \mathbf{X}\mathbf{X}^T.
\end{align}
We have omitted normalization by the number of elements $N$ of the dataset as it will not change the final eigenvectors and the order of eigenvalues. Elementary linear algebra guarantees that $\mathbf{C}$ has non-negative eigenvalues (i.e. $\mathbf{C}$ is positive semi-definite) and thus the eigenvector corresponding to the $i$th largest eigenvalue becomes the $i$th principal vector. 

In this work, we propose to investigate the 
analogue of Eq. (\ref{l2samplecovariance}) for MF operators. In other words, we consider the eigendecomposition of \begin{align}
\label{l1samplecovariance}
\mathbf{A} = \mathbf{X} \oplus \mathbf{X}^T,
\end{align}
where $\oplus\in\{\oplus_{mf},\odot,\odot_m\}$. Matrix $\mathbf{A}$ is called as MF-covariance matrix. Note that the ordinary matrix product in Eq. (\ref{l2samplecovariance}) is replaced by the MF product in Eq. (\ref{l1samplecovariance}). On the other hand, since $\mathbf{A}$ is no longer constructed using $\ell_2$-products, it is not guaranteed to be symmetric or positive semi-definite. Still, we have the following result.
\begin{theorem}
\label{theorem1}
Let $\oplus\in\{\odot,\odot_m\}$. Then, $\mathbf{A} = \mathbf{X} \oplus \mathbf{X}^T$ is symmetric and positive semi-definite for any $\mathbf{X}$.
\end{theorem}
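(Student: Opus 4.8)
The plan is to treat symmetry and positive semi-definiteness separately, and to reduce the matrix-level claim for $\mathbf{A}$ to a one-coordinate scalar claim. Throughout, write $A_{ij}=\mathbf{r}_i^T\oplus\mathbf{r}_j$, where $\mathbf{r}_i\in\mathbb{R}^{N}$ is the $i$-th row of $\mathbf{X}$, so that $\mathbf{A}\in\mathbb{R}^{D\times D}$ and $A_{ij}=\sum_{k=1}^{N}\kappa(X_{ik},X_{jk})$, with $\kappa(a,b)=\sign(ab)\min(|a|,|b|)$ for $\oplus=\odot$ and $\kappa(a,b)=\mathbf{1}(\sign(a)=\sign(b))\min(|a|,|b|)$ for $\oplus=\odot_m$.

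Symmetry is immediate: each scalar kernel $\kappa$ is symmetric in its two arguments, since $\sign(ab)=\sign(ba)$, $\mathbf{1}(\sign(a)=\sign(b))$ is symmetric, and $\min(|a|,|b|)=\min(|b|,|a|)$. Hence $A_{ij}=A_{ji}$ and $\mathbf{A}=\mathbf{A}^T$ for any $\mathbf{X}$.

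For positive semi-definiteness, the first step is the coordinate-wise decomposition $\mathbf{A}=\sum_{k=1}^{N}\mathbf{M}^{(k)}$ with $M^{(k)}_{ij}=\kappa(X_{ik},X_{jk})$; since a sum of symmetric PSD matrices is PSD, it suffices to show each $\mathbf{M}^{(k)}$ is PSD. The core ingredient is the classical fact that for nonnegative reals $t_1,\dots,t_D$ the matrix $[\min(t_i,t_j)]_{i,j}$ is PSD, which I would establish from the integral representation $\min(t_i,t_j)=\int_0^{\infty}\mathbf{1}(t_i>s)\,\mathbf{1}(t_j>s)\,ds$, exhibiting it as a Gram matrix of the functions $s\mapsto\mathbf{1}(t_i>s)$ in $L^2(0,\infty)$. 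Given this, the case $\oplus=\odot$ closes by writing, for each fixed $k$, $\mathbf{M}^{(k)}=\mathbf{D}_k\,[\min(|X_{ik}|,|X_{jk}|)]_{i,j}\,\mathbf{D}_k$ where $\mathbf{D}_k=\mathrm{diag}(\sign(X_{1k}),\dots,\sign(X_{Dk}))$ (using $\sign(ab)=\sign(a)\sign(b)$ for all $a,b$), a congruence of a PSD matrix and hence PSD. The case $\oplus=\odot_m$ closes by partitioning the index set $\{1,\dots,D\}$ according to the value of $\sign(X_{ik})\in\{-1,0,1\}$: since $M^{(k)}_{ij}=0$ whenever $\sign(X_{ik})\neq\sign(X_{jk})$, and also on the all-zero class, the matrix $\mathbf{M}^{(k)}$ is block-diagonal with each nonzero block a $[\min]$-matrix on coordinates of a fixed sign, hence PSD. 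Summing over $k$ gives $\mathbf{A}\succeq 0$.

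I expect the only genuine obstacle to be the $[\min]$-matrix lemma; the remainder is careful bookkeeping of the signs and of the degenerate value $\sign(0)=0$. I would also remark that an essentially equivalent route bypasses the matrix manipulations entirely by exhibiting explicit feature maps into $L^2(0,\infty)$ for $\kappa$ directly, namely $a\mapsto\sign(a)\mathbf{1}(|a|>s)$ for $\odot$ and $a\mapsto\big(\mathbf{1}(a>s),\mathbf{1}(-a>s)\big)$ for $\odot_m$, which shows $\kappa$ is a positive-definite kernel and therefore that $\mathbf{A}$, being a Gram matrix of the kernel $\sum_{k}\kappa(\cdot,\cdot)$ evaluated at the rows of $\mathbf{X}$, is symmetric PSD.
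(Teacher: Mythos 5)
Your proof is correct, and it is both more self-contained and more complete than the paper's own argument. The paper's appendix establishes positive semi-definiteness only for the sign-min operator $\odot$: it writes the single-coordinate kernel matrix as a Hadamard product of the rank-one sign matrix $\sign(\mathbf{x})\sign(\mathbf{x})^T$ with the matrix $[\min(|x_i|,|x_j|)]_{i,j}$, quotes the lemma of Nader et al.\ for the positive semi-definiteness of the latter, and invokes the Schur product theorem; the variant $\odot_m$ is not treated there at all (only a citation in the main text), and the coordinate-wise summation and the symmetry claim are left implicit. You instead prove the $[\min]$-lemma from scratch via the integral representation $\min(t_i,t_j)=\int_0^\infty \mathbf{1}(t_i>s)\,\mathbf{1}(t_j>s)\,ds$ (a Gram-matrix argument in $L^2(0,\infty)$), and you replace the Schur product theorem by the diagonal congruence $\mathbf{D}_k\,[\min(|X_{ik}|,|X_{jk}|)]_{i,j}\,\mathbf{D}_k$ --- which is in substance the same manipulation as the paper's, since a Hadamard product with a rank-one matrix $\mathbf{v}\mathbf{v}^T$ equals $\mathrm{diag}(\mathbf{v})\,\mathbf{M}\,\mathrm{diag}(\mathbf{v})$, but it needs no appeal to Schur's theorem. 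Most importantly, you give an explicit argument for $\odot_m$ by splitting each coordinate matrix into blocks according to the sign classes $\{-1,0,+1\}$ and observing that the cross-class entries and the zero-class block vanish, with the degenerate case $\sign(0)=0$ handled correctly; your closing feature-map remark ($a\mapsto \sign(a)\mathbf{1}(|a|>s)$ for $\odot$, and the pair $(\mathbf{1}(a>s),\mathbf{1}(-a>s))$ for $\odot_m$) is the cleanest packaging of the same idea. What the paper's route buys is brevity by citation; what yours buys is a complete, elementary proof covering both operators, which is what Theorem~\ref{theorem1} actually asserts.
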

The proof can be found in the appendix. In particular, the theorem shows that   $\odot$ and $\odot_m$ describe Mercer-type kernels. Theorem \ref{theorem1} paves the way for extending PCA to multiplication-free operators $\odot$ and $\odot_m$, as shown via Algorithm \ref{al: l1pca}.

\begin{algorithm}[h]
	\caption{Algorithm for $L_1$ PCA using MF operators}
	\begin{algorithmic}[1]
		\renewcommand{\algorithmicrequire}{\textbf{Input:}}
		\renewcommand{\algorithmicensure}{\textbf{Output:}}
		\REQUIRE $\mathbf{X}=[\mathbf{x}_1 \ \mathbf{x}_2 \ ... \  \mathbf{x}_N]\in \mathbb{R}^{D\times N}$
		\ENSURE  $\mathbf{W} \in \mathbb{R}^{D\times K}$
		\STATE Construct the MF covariance matrix $\mathbf{A}$ of $\mathbf{X}$ based on Eq. (\ref{l1samplecovariance}). 
		\STATE $[\mathbf{W}, \mathbf{D}] = \text{eigs}(\mathbf{A}, K)$
		\RETURN $\mathbf{W}$.\\
		Comment: Step 2 represents eigendecomposion of {\bf A} and returns a subset of diagonal matrix $\mathbf{D}$ of $K$ largest eigenvalues and matrix $\mathbf{W}$ whose columns are the corresponding right eigenvectors, so that $\mathbf{AW} = \mathbf{WD}$.
		Compared with the conventional $L_2$-PCA Algorithm, we can see that the only difference is at Step 1. We replace the standard covariance matrix by the multiplication-free covariance matrix.
	\end{algorithmic}
	\label{al: l1pca}
\end{algorithm}

The conclusions of Theorem \ref{theorem1} does not hold for the $\oplus_{mf}$ operator. A counterexample is provided by the dataset $\mathbf{x}_1 = [1\,\,2]^T,\,\mathbf{x}_2 = [-1\,-2]^T$, which yields a generalized covariance matrix $\mathbf{A} = [\begin{smallmatrix} 2 & 6 \\ 6 & 8 \end{smallmatrix}]$ with a negative determinant, and thus not positive semi-definite.

\section{Experimental Results}
In this section, we carry out an image reconstruction and denoising experiment using the EEF kernel based PCAs, $\ell_2$-PCA  and the recursive $\ell_1$-PCA to illustrate the robustness of the EEF kernel introduced in Section \ref{Robust Principal Component Analysis}. Image reconstruction example is the same as the experiment in \cite{Optimal algorithms for l1-subspace signal processing}.  The source code of \cite{Optimal algorithms for l1-subspace signal processing} is available in \cite{L1-PCA Toolbox}, so we only set the tolerance parameter of the recursive $\ell_1$-PCA method as $1\times10^{-8}$ as suggested by the author P. Markopoulos. For convenience, we name our method based on Eq. (\ref{eqn_mf_vector}) as "MF-$\ell_1$"-PCA, method based on Eq. (\ref{odotproduct}) as "min-$\ell_1$-PCA-1" and method based on Eq. (\ref{odotproduct2}) as "min-$\ell_1$-PCA-2", respectively, in Table \ref{tab: image reconstruction} and Table \ref{tab: image reconstruction_sp}.

In 
the first row of Fig. \ref{fig: image reconstruction}, we have three $128\times128=16384$ "clean" gray-scaled images ($\mathbf{I}\in \{0, \frac{1}{255}, ...,  \frac{255}{255}\}^{128\times128}$). We assume that the image $\mathbf{I}$ is not available but we have $N=10$ occluded versions $\mathbf{I}_1, \mathbf{I}_2, ..., \mathbf{I}_{10}$, are available as shown in the second row of Fig. \ref{fig: Statue 1} and Fig. \ref{fig: Cat}. The occluded images are created by partitioning the original image $\mathbf{I}$ into sixteen tiles of size $32\times32$ and replacing three arbitrarily selected tiles by $32\times32$ gray-scale-noise patches. The noise patches are in the uniformly random distribution in the interval $(0, 1)$. 

In the second experiment, we add salt and pepper noise to images and restore the original images using various PCA methods. We assume that the image $\mathbf{I}$ is not available but we have $N=10$ corrupted versions $\mathbf{I}_1, \mathbf{I}_2, ..., \mathbf{I}_{10}$, are available as shown in the third column (Fig. \ref{fig: Pikachu}) and the forth column (Fig. \ref{fig: Car}) of Fig. \ref{fig: image reconstruction}, respectively. The corrupted images are created by adding salt and pepper noise to the original image $\mathbf{I}$ with noise density 0.1. In other words, this affects 10\% pixels by making them either $0$ or $1$ assuming that the image pixel values are in the range of $[0, 1]$.

\begin{figure}[htb!]
    \centering
    \subfloat[Statue 1]{
		\begin{minipage}{0.23\linewidth}
			\includegraphics[width=1\linewidth]{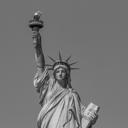}\vspace{3pt}
			\includegraphics[width=1\linewidth]{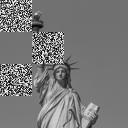}\vspace{3pt}
			\includegraphics[width=1\linewidth]{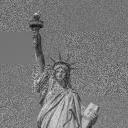}\vspace{3pt}
			\includegraphics[width=1\linewidth]{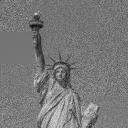}\vspace{3pt}
			\includegraphics[width=1\linewidth]{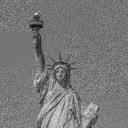}\vspace{3pt}
			\includegraphics[width=1\linewidth]{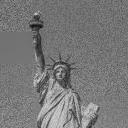}\vspace{3pt}
			\includegraphics[width=1\linewidth]{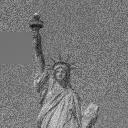}
			\label{fig: Statue 1}
	\end{minipage}}
	\subfloat[Cat]{
		\begin{minipage}{0.23\linewidth}
			\includegraphics[width=1\linewidth]{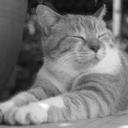}\vspace{3pt}
			\includegraphics[width=1\linewidth]{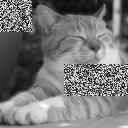}\vspace{3pt}
			\includegraphics[width=1\linewidth]{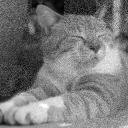}\vspace{3pt}
			\includegraphics[width=1\linewidth]{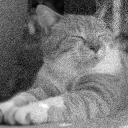}\vspace{3pt}
			\includegraphics[width=1\linewidth]{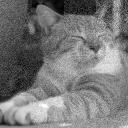}\vspace{3pt}
			\includegraphics[width=1\linewidth]{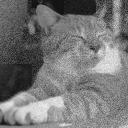}\vspace{3pt}
			\includegraphics[width=1\linewidth]{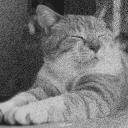}
			\label{fig: Cat}
	\end{minipage}}
	\subfloat[Pikachu]{
		\begin{minipage}{0.23\linewidth}
			\includegraphics[width=1\linewidth]{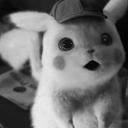}\vspace{3pt}
			\includegraphics[width=1\linewidth]{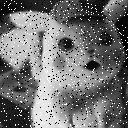}\vspace{3pt}
			\includegraphics[width=1\linewidth]{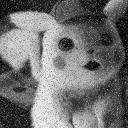}\vspace{3pt}
			\includegraphics[width=1\linewidth]{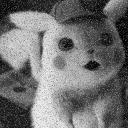}\vspace{3pt}
			\includegraphics[width=1\linewidth]{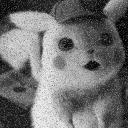}\vspace{3pt}
			\includegraphics[width=1\linewidth]{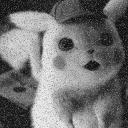}\vspace{3pt}
			\includegraphics[width=1\linewidth]{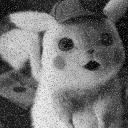}
			\label{fig: Pikachu}
	\end{minipage}}
	\subfloat[Car]{
		\begin{minipage}{0.23\linewidth}
			\includegraphics[width=1\linewidth]{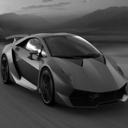}\vspace{3pt}
			\includegraphics[width=1\linewidth]{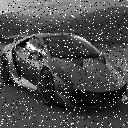}\vspace{3pt}
			\includegraphics[width=1\linewidth]{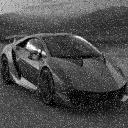}\vspace{3pt}
			\includegraphics[width=1\linewidth]{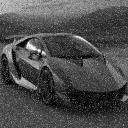}\vspace{3pt}
			\includegraphics[width=1\linewidth]{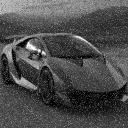}\vspace{3pt}
			\includegraphics[width=1\linewidth]{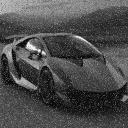}\vspace{3pt}
			\includegraphics[width=1\linewidth]{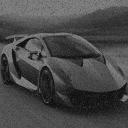}
			\label{fig: Car}
	\end{minipage}}
    \caption{Samples of image reconstruction results. Images in each columns are ordered as the original image (\engordnumber{1} row), the noise patches occluded image (\engordnumber{2} row, \engordnumber{1} and \engordnumber{2} columns) or salt-and-pepper noise corrupted image (\engordnumber{2} row, \engordnumber{3} and \engordnumber{4} columns), results of $\ell_2$-PCA (\engordnumber{3} row),  recursive $\ell_1$-PCA (\engordnumber{4} row), MF-$\ell_1$-PCA (\engordnumber{5} row), min-$\ell_1$-PCA-1 (\engordnumber{6} row) and min-$\ell_1$-PCA-2 (\engordnumber{7} row), respectively.}
    \label{fig: image reconstruction}
\end{figure}

\begin{table*}[htbp]
	\centering
	\caption{PSNR (dB) of Image Reconstruction Results of Noise Patches}
	\begin{tabular}{lllllll}
		\hline\noalign{\smallskip}
		\textbf{Images}&\textbf{Noisy Image}&\textbf{$L_2$-PCA}&\textbf{Recursive $L_1$-PCA\cite{Optimal algorithms for l1-subspace signal processing}}&\textbf{MF-$L_1$-PCA}&\textbf{Min-$L_1$-PCA-1}&\textbf{Min-$L_1$-PCA-2}\\
		\noalign{\smallskip}\hline\noalign{\smallskip}
        Lenna&16.7629&22.1631&24.7089&24.8434&24.7535&\textbf{26.0864}\\
        Statue 1&17.6746&26.6841&26.8580&27.0726&\textbf{28.1158}&27.6244\\
        Statue 2&16.8727&24.6482&25.0187&\textbf{25.0195}&24.9855&24.9843\\
        Earth&14.9133&22.1300&21.7362&21.8622&22.3728&\textbf{23.5439}\\
        Pikachu&15.2871&18.8648&22.7063&22.7091&22.8233&\textbf{23.3173}\\
        Flower&16.3968&21.2211&24.4147&24.4805&\textbf{24.6941}&24.6062\\
        Orange&15.7715&23.5939&23.6927&23.7039&24.4659&\textbf{25.1322}\\
        Cat&16.9120&24.7980&24.8240&\textbf{24.8706}&24.6083&24.6245\\
        Food&15.9369&22.7220&23.8487&23.8629&23.7767&\textbf{24.0548}\\
        Car&15.4178&23.1472&23.3687&23.2191&\textbf{23.6331}&23.6324\\
        Cobra&16.8129&22.3284&25.0985&25.1166&\textbf{25.2137}&24.7121\\
        River&17.2655&24.5775&25.2324&\textbf{25.4168}&24.9636&24.9636\\
        Butterfly&16.6592&24.3993&24.9096&24.8675&24.9390&\textbf{27.2313}\\
        Bridge&15.6619&22.2344&22.9112&\textbf{23.0056}&22.7037&22.7037\\
        \noalign{\smallskip}\hline\noalign{\smallskip}
        Average&16.3104&23.1080&24.2378&24.2893&24.4321&\textbf{24.8012}\\
		\noalign{\smallskip}\hline
	\end{tabular}
	\label{tab: image reconstruction}
\end{table*}

\begin{table*}[htbp]
	\centering
	\caption{PSNR (dB) of Image Reconstruction Results of Salt and Pepper Noise}
	\begin{tabular}{lllllll}
		\hline\noalign{\smallskip}
		\textbf{Images}&\textbf{Noisy Image}&\textbf{$L_2$-PCA}&\textbf{Recursive $L_1$-PCA\cite{Optimal algorithms for l1-subspace signal processing}}&\textbf{MF-$L_1$-PCA}&\textbf{Min-$L_1$-PCA-1}&\textbf{Min-$L_1$-PCA-2}\\
		\noalign{\smallskip}\hline\noalign{\smallskip}
        Lenna&15.4380&24.5958&24.9956&25.0011&\textbf{27.4015}&26.8035\\
        Statue 1&15.9694&25.6774&25.7857&25.7881&\textbf{28.4256}&27.3701\\
        Statue 2&15.6564&24.8093&25.3118&25.3079&\textbf{27.3624}&26.7524\\
        Earth&13.9172&19.8308&22.6834&22.6611&\textbf{24.0921}&\textbf{24.0921}\\
        Pikachu&15.0674&23.8277&24.1352&24.1562&\textbf{24.8679}&24.6878\\
        Flower&15.5086&24.4801&24.9658&24.9929&26.6930&\textbf{27.0417}\\
        Orange&14.5181&21.8783&23.5725&23.5578&23.5971&\textbf{26.3117}\\
        Cat&15.4272&24.5837&24.7230&24.7408&\textbf{25.9535}&25.5513\\
        Food&15.1661&24.2062&24.3647&24.3428&\textbf{25.2575}&24.6552\\
        Car&14.9950&23.6471&24.1611&24.1658&\textbf{26.0886}&\textbf{26.0886}\\
        Cobra&15.5983&20.8556&25.1478&25.1371&\textbf{26.4361}&25.9094\\
        River&15.5659&24.8450&25.1635&25.2141&\textbf{26.8476}&25.8904\\
        Butterfly&14.9599&23.4908&24.4593&24.4333&\textbf{25.0181}&24.8584\\
        Bridge&14.7112&22.7575&23.7324&23.7341&\textbf{24.2897}&24.0655\\
        \noalign{\smallskip}\hline\noalign{\smallskip}
        Average&15.1785&23.5347&24.5144&24.5166&\textbf{25.8808}&25.7199\\
		\noalign{\smallskip}\hline
	\end{tabular}
	\label{tab: image reconstruction_sp}
\end{table*}

We perform PCA on the set of $\mathbf{V} = [\mathbf{v}_1\  \mathbf{v}_2\ ...\ \mathbf{v}_{10}]$, where $\mathbf{v}_i = \text{vec}(\mathbf{I}_i), i=1, 2, ..., 10$, is the vector form of $\mathbf{I}_i$. In this way, we obtain the eigenvector matrix $\mathbf{W}\in \mathbb{R}^{16384\times2}$ of the covariance or the MF-covariance matrices of $(\mathbf{V}-\mathbf{\bar{v}})$. Then, we recover the image $\mathbf{I}$ as
\begin{equation}
\mathbf{\hat{v}}_i=\mathbf{WW}^T(\mathbf{v}_i-\mathbf{\bar{v}})+\mathbf{\bar{v}}
\label{eq: m_hat_0_mean}
\end{equation}
\begin{equation}
\mathbf{\hat{I}} = \text{mat}\mathbf{\hat{(v})_i}
\end{equation}
where $\mathbf{\bar{v}}\in \mathbb{R}^{16384\times 1}$ is the mean value of $[\mathbf{v}_1\  \mathbf{v}_2\ ...\ \mathbf{v}_{10}]$, $\mathbf{0.5}$ or $\mathbf{0}$, $\mathbf{I}_i$ is an arbitrary occluded image, and $\text{mat}(\cdot)$ is the inverse transform of $\text{vec}(\cdot)$ that reshapes a vector back to the matrix form. We calculate $\mathbf{\hat{v}}_i$ in the method that returns the largest peak signal-to-noise-ratio (PSNR). 

PSNR between the reconstructed image $\mathbf{\hat{I}}$ and the original image $\mathbf{I}$ as the following equations is used for evaluation in Table \ref{tab: image reconstruction} and Table \ref{tab: image reconstruction_sp}:
\begin{equation}
\text{MSE} = \text{mean}((\mathbf{\hat{I}}-\mathbf{I})^2),
\end{equation}
\begin{equation}
\text{PSNR} = 10\text{log}_{10}(\frac{\text{peakval}^2}{\text{MSE}}),
\end{equation}
where $(\cdot)^2$ is the element-wise square and ``peakval" is the peak signal value. The higher the value of PSNR is, the better the reconstruction result is.

Our experiment is summarized in Algorithm \ref{al: image reconstruction experiment}. Results of these PCA methods are shown in Fig. \ref{fig: image reconstruction} for four test images and their statistics are provided in Table \ref{tab: image reconstruction} and Table \ref{tab: image reconstruction_sp}. Although which method works the best depends on the images, our three methods return larger PSNR than $\ell_2$-PCA and the recursive $\ell_1$-PCA in both experiments, and the two min-$\ell_1$-PCAs are better than the MF-$\ell_1$-PCA, globally.
For example, the min-$\ell_1$-PCA produces about $1.4dB$ better than the recursive $\ell_1$-PCA in the salt-and-pepper noise removal experiment.

\begin{algorithm}[htbp]
	\caption{Image Reconstruction Experiment}
	\begin{algorithmic}[1]
		\renewcommand{\algorithmicrequire}{\textbf{Input:}}
		\renewcommand{\algorithmicensure}{\textbf{Output:}}
		\REQUIRE N corrupted images $\mathbf{I}_1, \mathbf{I}_2, ..., \mathbf{I}_N\in \mathbb{R}^{D\times D}$.
		\ENSURE  Reconstructed image $\mathbf{\hat{I}}$.
		\FOR{$i=1, 2, ..., N$} 
		    \STATE $\mathbf{v_i} = \text{vec}(\mathbf{I_i})\in \mathbb{R}^{D^2\times 1}$;
		\ENDFOR 
		\STATE $\mathbf{V} = [\mathbf{v}_1\ \mathbf{v}_2\ ...\  \mathbf{v}_N]\in \mathbb{R}^{D^2\times N}$;
		\STATE $\mathbf{\bar{v}} = \mathbf{0}, \mathbf{0.5}$ or $\text{mean}(\textbf{V})\in \mathbb{R}^{D^2\times 1}$;
		\STATE Run PCA on $(\mathbf{V}-\mathbf{\bar{v}})$ to obtain $K$-dominant eigenvector matrix $\mathbf{W} = [\mathbf{w}_1\ \mathbf{w}_2 \ ... \  \mathbf{w}_K]\in \mathbb{R}^{D^2\times K}$;
		\STATE $\mathbf{\hat{v}}_i=\mathbf{WW}^T(\mathbf{v}_i-\mathbf{\bar{v}})+\mathbf{\bar{v}}\in \mathbb{R}^{D^2\times 1}$;
		\STATE $\mathbf{\hat{I}} = \text{mat}(\mathbf{\hat{v}}_i) \in \mathbb{R}^{D\times D}$;
		\RETURN $\mathbf{\hat{I}}$.\\
		Comment: In this experiment, $N=10, D=128$ and $K=2$. Function $\text{mean}(\cdot)$ is the mean of each row, so it returns a column vector. Function $\text{vec}(\cdot)$ reshapes a matrix into the column vector form, and function $\text{mat}(\cdot)$ is its inverse transform that reshapes a column vector back to the matrix form. $(\mathbf{V}-\mathbf{\bar{v}})$ is defined as $[\mathbf{v}_1-\mathbf{\bar{v}}\ \mathbf{v}_2-\mathbf{\bar{v}}\ ...\  \mathbf{v}_N-\mathbf{\bar{v}}]$.
	\end{algorithmic}
	\label{al: image reconstruction experiment}
\end{algorithm}


We also compared the computational cost of the PCA algorithms 
 to reconstruct an image in MATLAB. As it is shown in Table \ref{tab: time}, $\ell_2$-PCA is the fastest algorithm, while our proposed kernel methods are slightly slower than $\ell_2$-PCA but significantly faster than the recursive $\ell_1$-PCA. The recursive $\ell_1$-PCA is the slowest because it obtains the result by recursion, while $\ell_2$-PCA and our three methods return the result straight-forwardly. The reason why our kernel PCAs run a little slower than $\ell_2$-PCA is that, the time to construct an MF-covariance matrix is slightly slower compared to the sample covariance matrix, which is optimized in MATLAB. The computational cost of eigenvalue-eigenvector computations are the same in both $\ell_2$-PCA and the proposed kernel-PCAs.
 

\begin{table}[htbp]
    \centering
	\caption{Computational cost in seconds}
    \begin{tabular}{llll}
    \hline\noalign{\smallskip}
    \textbf{Image Size}&\textbf{$L_2$-PCA}&\textbf{Recursive $L_1$-PCA\cite{Optimal algorithms for l1-subspace signal processing}}&\textbf{Our PCAs$^a$}\\
    \noalign{\smallskip}\hline\noalign{\smallskip}
    $32\times32$&0.02&3.57&0.02\\
    $48\times48$&0.07&4.80&0.09\\
    $64\times64$&0.21&6.50&0.25\\
    $80\times80$&0.49&8.04&0.50\\
    $96\times96$&1.03&10.81&1.10\\
    $112\times112$&1.81&14.02&1.91\\
    $128\times128$&3.24&20.38&3.38\\
    \noalign{\smallskip}\hline\noalign{\smallskip}
    \multicolumn{4}{l}{\textbf{$^a$} Proposed kernel PCAs are comparable to the regular PCA.}\\
    \multicolumn{4}{l}{Due to space limitation, we list them in one column.}
			\vspace{-10pt}
    \end{tabular}
    \label{tab: time}
\end{table}

\section{Conclusion}
In this paper, we proposed three new robust PCA methods. We have reached the following conclusions: (i) Proposed novel kernel methods are more energy-efficient than $\ell_2$-PCA because their Gram matrices are computed without any multiplication operations. (ii) They do not suffer from outliers in the data as in $\ell_2$-PCA because they are based on the $\ell_1$-norm. (iii) They do no require any hyper-parameter optimization as in the recursive $\ell_1$-PCA \cite{Optimal algorithms for l1-subspace signal processing} because their Gram matrices are straightforward to compute as described in Eq. (\ref{l1samplecovariance}). 

We compared the new kernel-based methods with the $\ell_2$-PCA and the recursive $\ell_1$-PCA on an image reconstruction and salt-and-pepper noise removal tasks and found out that our min-$\ell_1$-PCAs returns the largest PSNR among these methods in most scenarios.


\appendix
Let $\mathbf{x}, \mathbf{y} \in \mathbb{R}^N$. We define the min-operator $\oplus : \mathbb{R}^N \times\mathbb{R}^N  \mapsto \mathbb{R}$ as following
\begin{equation}
    \mathbf{x} \oplus \mathbf{y} := \sum_{i=1}^{N} \text{sgn}(x_iy_i) \min(|x_i|, |y_i|)
\end{equation}
In the following we will show that the operator $\oplus$ defines a valid kernel $K(\mathbf{x},\mathbf{y})$.
A symmetric function $K: \mathbb{R}^N \times\mathbb{R}^N  \mapsto \mathbb{R}$ is a kernel iff 
\begin{equation}\label{eqn:kernel_defn}
    \sum_{i=1}^{N} \sum_{j=1}^{N} a_i a_j K(\mathbf{x_i}, \mathbf{x_j) \geq 0}
\end{equation}
for any reals $a_i, a_j$ and for any vectors $\mathbf{x_i}, \mathbf{x_j} \in \mathbb{R}^N$. In our case, we are interested in proving that $K(\mathbf{x_i}, \mathbf{x_j}) = \mathbf{x}_i \oplus \mathbf{x}_j$ satisfies Eq. \ref{eqn:kernel_defn}.

Define a matrix $\mathbf{K} \in \mathbb{R}^{N\times N}$ such that $\mathbf{K}_{ij} = \text{sgn}(x_i x_j)\text{min}(|x_i|,|x_j|)$. Proving that $K(.,.)$ is a valid kernel is equivalent to proving that the matrix $\mathbf{K}$ is positive semi-definite.

We will use the following facts to construct our proof that $\oplus$ is a kernel:
\begin{theorem}[Schur product theorem]\cite{schur1911}
Let $\mathbf{A}, \mathbf{B} \in \mathbb{R}^{N \times N}$ be two positive semi-definite matrices, then their Hadamard product $({\mathbf{A}\odot\mathbf{B}})_{ij}:=\mathbf{A}_{ij}\mathbf{B}_{ij}$ is also positive semi-definite.
\end{theorem}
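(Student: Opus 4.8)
The plan is to prove the Schur product theorem by exploiting a nonnegatively weighted rank-one decomposition of one of the two factors. Since $\mathbf{B}$ is real, symmetric and positive semi-definite, the spectral theorem provides vectors $\mathbf{b}_1,\dots,\mathbf{b}_N \in \mathbb{R}^N$ with $\mathbf{B} = \sum_{k=1}^{N}\mathbf{b}_k\mathbf{b}_k^T$ (concretely, $\mathbf{b}_k=\sqrt{\lambda_k}\,\mathbf{q}_k$ for the eigenvalues $\lambda_k\ge 0$ and orthonormal eigenvectors $\mathbf{q}_k$ of $\mathbf{B}$); entrywise this says $\mathbf{B}_{ij}=\sum_{k}(\mathbf{b}_k)_i(\mathbf{b}_k)_j$. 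Symmetry of $\mathbf{A}\odot\mathbf{B}$ is immediate from the symmetry of $\mathbf{A}$ and $\mathbf{B}$, so only positive semi-definiteness needs work.

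Next I would fix an arbitrary $\mathbf{v}\in\mathbb{R}^N$ and expand the quadratic form $\mathbf{v}^T(\mathbf{A}\odot\mathbf{B})\mathbf{v}=\sum_{i,j}v_iv_j\mathbf{A}_{ij}\mathbf{B}_{ij}$, substitute the rank-one expansion of $\mathbf{B}_{ij}$, and interchange the (finite) summations over $(i,j)$ and $k$. Grouping the factors as $\big(v_i(\mathbf{b}_k)_i\big)\mathbf{A}_{ij}\big(v_j(\mathbf{b}_k)_j\big)$ rewrites the whole expression as $\sum_{k=1}^{N}\mathbf{u}_k^T\mathbf{A}\,\mathbf{u}_k$, where $\mathbf{u}_k$ is the entrywise (Hadamard) product of $\mathbf{v}$ and $\mathbf{b}_k$. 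This is the crux of the argument, and it is essentially a bookkeeping step once the decomposition of $\mathbf{B}$ is in hand.

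Finally I would note that positive semi-definiteness of $\mathbf{A}$ makes every summand $\mathbf{u}_k^T\mathbf{A}\,\mathbf{u}_k$ nonnegative, hence $\mathbf{v}^T(\mathbf{A}\odot\mathbf{B})\mathbf{v}\ge 0$; since $\mathbf{v}$ was arbitrary, $\mathbf{A}\odot\mathbf{B}$ is positive semi-definite. The main obstacle is really the very first step: obtaining the nonnegatively weighted sum of rank-one terms for $\mathbf{B}$, which is exactly where the hypothesis $\mathbf{B}\succeq 0$ enters (via the spectral theorem, or equivalently a Gram/Cholesky factorization $\mathbf{B}=\mathbf{M}\mathbf{M}^T$). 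A shorter but less self-contained alternative would be probabilistic: taking independent Gaussian vectors $\mathbf{g}\sim\mathcal{N}(\mathbf{0},\mathbf{A})$ and $\mathbf{h}\sim\mathcal{N}(\mathbf{0},\mathbf{B})$, the entrywise product $\mathbf{g}\odot\mathbf{h}$ has covariance $\mathbf{A}\odot\mathbf{B}$, which is therefore positive semi-definite; I would nonetheless keep the purely algebraic proof here since it needs no probabilistic machinery and feeds directly into the application of the theorem to the matrix $\mathbf{K}$.
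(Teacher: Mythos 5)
Your proof is correct. Note, however, that the paper does not prove this statement at all: it is quoted as a known result with a citation to Schur (1911) and is used as a black box in the appendix to establish the corollary about $\mathbf{K}_{ij}=\mathrm{sgn}(x_ix_j)\min(|x_i|,|x_j|)$. What you have written is the standard classical argument — decompose $\mathbf{B}=\sum_k \mathbf{b}_k\mathbf{b}_k^T$ via the spectral theorem (or any Gram/Cholesky factorization), expand $\mathbf{v}^T(\mathbf{A}\odot\mathbf{B})\mathbf{v}=\sum_k \mathbf{u}_k^T\mathbf{A}\,\mathbf{u}_k$ with $\mathbf{u}_k=\mathbf{v}\odot\mathbf{b}_k$, and use $\mathbf{A}\succeq 0$ termwise — and every step checks out, including the interchange of finite sums and the remark that symmetry is inherited entrywise. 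Your Gaussian-covariance alternative is also valid, since for independent $\mathbf{g}\sim\mathcal{N}(\mathbf{0},\mathbf{A})$ and $\mathbf{h}\sim\mathcal{N}(\mathbf{0},\mathbf{B})$ one has $\mathbb{E}[(g_ih_i)(g_jh_j)]=\mathbf{A}_{ij}\mathbf{B}_{ij}$, but your choice to keep the purely algebraic version is the right one for a self-contained supplement to the paper's citation-only treatment.
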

\begin{lemma}\cite{On the positive semi-definite property of similarity matrices} R. Nader, A. Bretto, B. Mourad and H. Abbas. ``On the Positive Semi-definite Property of Similarity Matrices." \emph{Theoretical Computer Science}
Let $\mathbf{x} \in \mathbf{R}^N$ be a strictly positive vector. Then the matrix $\mathbf{A}_{ij}:=\min(x_i,x_j)$ is positive semi-definite.
\end{lemma}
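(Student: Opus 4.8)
The plan is to exhibit $\mathbf{A}$ as a nonnegative combination of rank-one positive semi-definite matrices, from which positive semi-definiteness follows immediately. The starting observation is that positive semi-definiteness is invariant under simultaneous permutation of rows and columns: if $\mathbf{P}$ is a permutation matrix, then $\mathbf{P}\mathbf{A}\mathbf{P}^T$ is congruent to $\mathbf{A}$ through an orthogonal matrix ($\mathbf{P}^T=\mathbf{P}^{-1}$), so one is PSD iff the other is. Hence I may assume without loss of generality that the entries of $\mathbf{x}$ are sorted, $0 < x_1 \le x_2 \le \cdots \le x_N$. Under this ordering the matrix entry simplifies to $\mathbf{A}_{ij} = \min(x_i, x_j) = x_{\min(i,j)}$.

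Next I would telescope $x_{\min(i,j)}$. Setting $d_1 := x_1$ and $d_\ell := x_\ell - x_{\ell-1}$ for $\ell \ge 2$, the sorting guarantees $d_\ell \ge 0$ for all $\ell$, with $d_1 > 0$ by strict positivity. Then $x_k = \sum_{\ell=1}^k d_\ell$, so that
\begin{equation}
\mathbf{A}_{ij} = x_{\min(i,j)} = \sum_{\ell=1}^{\min(i,j)} d_\ell = \sum_{\ell=1}^{N} d_\ell\, \mathbf{1}(\ell \le i)\,\mathbf{1}(\ell \le j).
\end{equation}
For each $\ell$ define the vector $\mathbf{u}_\ell \in \mathbb{R}^N$ with $(\mathbf{u}_\ell)_i = \mathbf{1}(\ell \le i)$, i.e. the vector whose last $N-\ell+1$ entries equal one. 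The identity above reads $\mathbf{A} = \sum_{\ell=1}^N d_\ell\, \mathbf{u}_\ell \mathbf{u}_\ell^T$. Each $\mathbf{u}_\ell \mathbf{u}_\ell^T$ is a rank-one PSD matrix and each $d_\ell \ge 0$, so $\mathbf{A}$ is a nonnegative combination of PSD matrices and is therefore PSD. Equivalently, for any $\mathbf{a} \in \mathbb{R}^N$ one has $\mathbf{a}^T \mathbf{A}\, \mathbf{a} = \sum_{\ell=1}^N d_\ell (\mathbf{u}_\ell^T \mathbf{a})^2 \ge 0$, which verifies the kernel condition directly.

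I do not expect a genuine obstacle here; the entire content lies in spotting the telescoping/rank-one decomposition, which is the discrete analogue of the integral representation $\min(x_i,x_j) = \int_0^\infty \mathbf{1}(t \le x_i)\,\mathbf{1}(t \le x_j)\, dt$. The only technical care needed is to justify the reduction to the sorted case via permutation congruence, and to note that strict positivity is used only to keep all $d_\ell \ge 0$ with $d_1>0$; the argument in fact shows $\mathbf{A}$ is PSD for any nonnegative $\mathbf{x}$. If a fully coordinate-free presentation is preferred, the integral representation yields the quadratic form $\mathbf{a}^T \mathbf{A}\, \mathbf{a} = \int_0^\infty \bigl(\sum_{i} a_i \mathbf{1}(t \le x_i)\bigr)^2 dt \ge 0$ in a single line, bypassing the sorting step altogether.
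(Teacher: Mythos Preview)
Your argument is correct. The reduction to the sorted case via permutation congruence is sound, the telescoping decomposition $\mathbf{A}=\sum_{\ell=1}^N d_\ell\,\mathbf{u}_\ell\mathbf{u}_\ell^T$ with $d_\ell\ge 0$ is valid, and the conclusion follows. The integral representation you mention at the end is an equally clean alternative.

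Note, however, that the paper does not actually prove this lemma: it is stated in the appendix with a citation to Nader, Bretto, Mourad and Abbas (\emph{Theoretical Computer Science}, 2019) and then used as a black box, together with the Schur product theorem, to establish the corollary about the signed-min kernel. So there is no ``paper's own proof'' to compare against; you have supplied a self-contained argument where the paper defers to the literature. Your rank-one decomposition is in fact the standard proof of this fact and is essentially what one finds in the cited reference, so nothing is lost by including it.
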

Our claim is the following:
\begin{corollary}
Let $\mathbf{x} \in \mathbf{R}^N$. Then the matrix $\mathbf{K}_{ij}:= \text{sgn}(x_i x_j)\min(|x_i|, |x_j|)$ is positive semi-definite.
\end{corollary}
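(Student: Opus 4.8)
The plan is to exhibit $\mathbf{K}$ as a Hadamard (entrywise) product of two positive semi-definite matrices and then apply the Schur product theorem. The starting point is the elementary identity $\text{sgn}(x_i x_j) = \text{sgn}(x_i)\,\text{sgn}(x_j)$, which holds for all $i,j$ (both sides equal $0$ as soon as $x_i$ or $x_j$ vanishes). Setting $\mathbf{s} := [\,\text{sgn}(x_1)\ \cdots\ \text{sgn}(x_N)\,]^{T}$, $\mathbf{S} := \mathbf{s}\mathbf{s}^{T}$, and defining $\mathbf{M}\in\mathbb{R}^{N\times N}$ by $\mathbf{M}_{ij} := \min(|x_i|,|x_j|)$, this identity factors the entries of $\mathbf{K}$ as $\mathbf{K}_{ij} = \mathbf{S}_{ij}\,\mathbf{M}_{ij}$, that is, $\mathbf{K} = \mathbf{S}\odot\mathbf{M}$. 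Since $\mathbf{S} = \mathbf{s}\mathbf{s}^{T}$ is a rank-one Gram matrix, it is positive semi-definite, so by the Schur product theorem it suffices to show that $\mathbf{M}$ is positive semi-definite.

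The matrix $\mathbf{M}$ is exactly the object treated by the Lemma, with the single caveat that the Lemma assumes a \emph{strictly} positive vector whereas the numbers $|x_i|$ may equal $0$. Disposing of this degenerate case is the only genuine obstacle, and I would handle it by perturbation: for $\varepsilon>0$ the vector $(|x_1|+\varepsilon,\ldots,|x_N|+\varepsilon)$ is strictly positive, so the Lemma gives that the matrix with entries $\min(|x_i|+\varepsilon,|x_j|+\varepsilon) = \mathbf{M}_{ij}+\varepsilon$ is positive semi-definite; equivalently, writing $\mathbf{1}$ for the all-ones vector, $\mathbf{a}^{T}\mathbf{M}\mathbf{a} + \varepsilon\,(\mathbf{1}^{T}\mathbf{a})^{2}\ge 0$ for every $\mathbf{a}\in\mathbb{R}^{N}$ and every $\varepsilon>0$. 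Letting $\varepsilon\to 0^{+}$ gives $\mathbf{a}^{T}\mathbf{M}\mathbf{a}\ge 0$, so $\mathbf{M}$ is positive semi-definite. (Alternatively, one may drop the indices $i$ with $x_i=0$: these contribute only identically-zero rows and columns to $\mathbf{M}$, the surviving principal submatrix has strictly positive generators and so is covered by the Lemma, and zero-padding a positive semi-definite matrix preserves positive semi-definiteness.)

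With both factors positive semi-definite, the Schur product theorem yields that $\mathbf{K} = \mathbf{S}\odot\mathbf{M}$ is positive semi-definite, which is the Corollary; apart from the vanishing-coordinate case above, the argument is routine. This is precisely what makes Theorem~\ref{theorem1} work: for $\oplus = \odot$ the $(k,\ell)$ entry of $\mathbf{A} = \mathbf{X}\oplus\mathbf{X}^{T}$ equals $\sum_{m=1}^{N}\text{sgn}\big((\mathbf{x}_m)_k(\mathbf{x}_m)_\ell\big)\min\big(|(\mathbf{x}_m)_k|,|(\mathbf{x}_m)_\ell|\big)$, a sum over the data vectors $\mathbf{x}_m$ of matrices of exactly the Corollary's form, and a sum of symmetric positive semi-definite matrices is symmetric positive semi-definite; the variant $\odot_m$ follows the same way using $\mathbf{1}(\text{sgn}(a)=\text{sgn}(b))\min(|a|,|b|) = \tfrac{1}{2}\big(\min(|a|,|b|) + \text{sgn}(a)\,\text{sgn}(b)\min(|a|,|b|)\big)$ for all reals $a,b$.
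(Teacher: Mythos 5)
Your proof is correct and follows essentially the same route as the paper's: factor $\mathbf{K}$ as the Hadamard product of the rank-one sign matrix $\text{sgn}(\mathbf{x})\text{sgn}(\mathbf{x})^{T}$ and the min-matrix, invoke the Lemma for the latter, and conclude via the Schur product theorem. Your $\varepsilon$-perturbation (or zero-padding) step handling coordinates with $x_i=0$ is a worthwhile addition, since the Lemma as cited assumes a strictly positive vector and the paper's proof silently skips this degenerate case.
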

\begin{proof}
The matrix $\mathbf{K}_{ij}$ can be written as hadamard product between matrix $\mathbf{B}_{ij}=\text{sgn}(x_i)\text{sgn}(x_j)$ and $\mathbf{A}_{ij}=\min(|x_i|,|x_j|)$, the matrix $\mathbf{B}$ is a (rank-one) positive semi-definite matrix since it can be written as $\text{sgn}(\mathbf{x})\text{sgn}(\mathbf{x})^T$. The matrix $\mathbf{A}$ is positive semi-defnite according to Lemma 1. The Hadamard product $\mathbf{K}=\mathbf{A}\odot\mathbf{B}$ is positive semi-definite according to Theorem 1. Thus the $\oplus$ operator defines a valid kernel.
\end{proof}
\end{document}